\documentclass[11pt]{article}
\usepackage[T1]{fontenc}
\usepackage{fullpage}
\usepackage{amsmath,amsthm,amsfonts,amssymb,dsfont}
\usepackage{microtype}
\usepackage{hyperref,cleveref}

\Crefformat{equation}{#2(#1)#3}
\crefformat{equation}{#2(#1)#3}
\Crefrangeformat{equation}{#3(#1)#4--#5(#2)#6}
\crefrangeformat{equation}{#3(#1)#4--#5(#2)#6}
\Crefmultiformat{equation}{#2(#1)#3}{ and~#2(#1)#3}{, #2(#1)#3}{ and~#2(#1)#3}
\crefmultiformat{equation}{#2(#1)#3}{ and~#2(#1)#3}{, #2(#1)#3}{ and~#2(#1)#3}

\usepackage{enumitem}
\setlist[description]{font=\normalfont}

\usepackage{csquotes}
\MakeOuterQuote{"}

\usepackage{mathtools}
\mathtoolsset{centercolon}
\DeclarePairedDelimiterXPP\ind[1]{\mathds{1}}{\lbrace}{\rbrace}{}{#1} %
\DeclarePairedDelimiterX\eval[1]{\lbrace}{\rvert}{#1 \delimsize\rbrace} %
\DeclarePairedDelimiter\ip{\langle}{\rangle} %
\DeclarePairedDelimiter\abs{\lvert}{\rvert} %
\DeclarePairedDelimiter\norm{\lVert}{\rVert} %
\DeclarePairedDelimiter\del{\lparen}{\rparen} %
\DeclarePairedDelimiter\set{\lbrace}{\rbrace} %
\DeclarePairedDelimiter\intoc{\lparen}{\rbrack} %
\DeclarePairedDelimiter\intcc{\lbrack}{\rbrack} %

\providecommand\given{}
\newcommand{\pipeseparator}{\nonscript\:\delimsize\vert\nonscript\:\mathopen{}}
\begingroup
  \catcode`\|=13
  \gdef|{\pipeseparator}
\endgroup
\newcommand{\activatepipe}{%
  \renewcommand\given\pipeseparator
  \mathcode`\|="8000
}

\DeclarePairedDelimiterX{\Set}[1]{\{} {\}}{%
  \activatepipe
  #1
}
\DeclarePairedDelimiterX\Sbr[1]\lbrack\rbrack{%
  \activatepipe
  #1
}
\DeclarePairedDelimiterX\Del[1]\lparen\rparen{%
  \activatepipe
  #1
}
\DeclarePairedDelimiterX\Braket[1]\langle\rangle{%
  \activatepipe
  #1
}

\newtheorem{theorem}{Theorem} %
\newtheorem{lemma}{Lemma} %
\newtheorem{corollary}{Corollary} %
\theoremstyle{definition}
\newtheorem{definition}{Definition} %
\newtheorem{remark}{Remark} %

\DeclareMathOperator\sign{sign} %

\newcommand\T{{\scriptscriptstyle{\mathsf{T}}}} %
\newcommand\R{\mathbb{R}} %
\newcommand\N{\mathbb{N}} %

\def\ddefloop#1{\ifx\ddefloop#1\else\ddef{#1}\expandafter\ddefloop\fi}
\def\ddef#1{\expandafter\def\csname bf#1\endcsname{\ensuremath{\mathbf{#1}}}}
\ddefloop abcdefghijklmnopqrstuvwxyzABCDEFGHIJKLMNOPQRSTUVWXYZ\ddefloop
\def\ddef#1{\expandafter\def\csname bf#1\endcsname{\ensuremath{\boldsymbol{\csname #1\endcsname}}}}
\ddefloop {alpha}{beta}{gamma}{delta}{epsilon}{varepsilon}{zeta}{eta}{theta}{vartheta}{iota}{kappa}{lambda}{mu}{nu}{xi}{pi}{varpi}{rho}{varrho}{sigma}{varsigma}{tau}{upsilon}{phi}{varphi}{chi}{psi}{omega}{Gamma}{Delta}{Theta}{Lambda}{Xi}{Pi}{Sigma}{varSigma}{Upsilon}{Phi}{Psi}{Omega}{ell}\ddefloop
\def\ddef#1{\expandafter\def\csname cal#1\endcsname{\ensuremath{\mathcal{#1}}}}
\ddefloop ABCDEFGHIJKLMNOPQRSTUVWXYZ\ddefloop

\usepackage[round]{natbib}
\newcommand\queryemb{\ensuremath{\mathsf{Q}}}
\newcommand\keyemb{\ensuremath{\mathsf{K}}}
\newcommand\valueemb{\ensuremath{\mathsf{V}}}
\newcommand\ips{\ensuremath{\mathbb{V}}}

\newcommand\relu{\ensuremath{\sigma_{\operatorname{r}}}}

\title{Lower bounds for one-layer transformers that compute parity}
\author{Daniel Hsu}

\begin{document}

\maketitle

\begin{abstract}
  This note shows that no self-attention layer post-processed by a rational function can sign-represent the parity function unless the product of the number of heads and the degree of the post-processing function grows linearly with the input length.
  Combining this lower bound with rational approximation of ReLU networks yields a margin-dependent extension for self-attention layers post-processed by ReLU networks.
\end{abstract}

\section{Introduction}

\citet{kozachinskiy2026parity} recently showed that no self-attention layer post-processed by a fixed-size ReLU network can compute the parity of $n$ input bits, for sufficiently large $n$.
Their proof uses a result of \citet{kane2013correct} and first-order logical representations to establish a $\sqrt{n}(\log n)^{O(1)}$ upper bound on the average sensitivity of such self-attention layers, so they cannot compute functions of higher average sensitivity such as parity.

If the ReLU network was permitted to have "complexity" growing with $n$, then it is easy to construct a self-attention head---and the post-processing ReLU network---that computes parity.
So the insufficiency of self-attention layers is sensitive to the post-processing function.
This note gives a quantitative variant of \citeauthor{kozachinskiy2026parity}'s result that considers the complexity of the post-processing function.
For rational post-processing, the lower bound below shows that increasing only the value-embedding dimension does not help: the product of the number of heads and the rational degree must be large.
Combining the lower bound with rational approximation of ReLU networks gives a margin-based lower bound for self-attention layers post-processed by ReLU networks.

\section{Definitions}

\begin{definition} \label{def:sa}
  A \emph{self-attention head $A \colon \set{0,1}^n \to \R^d$ with value-embedding dimension $d$ (for binary inputs)} is a function of the form
  \begin{equation*}
    A(x_1,\dotsc,x_n)
    =
    \frac{
      \sum_{i=1}^n \exp\del{ \ip{\queryemb(x_n), \keyemb_i(x_i)} } \, \valueemb_i(x_i)
    }{
      \sum_{i=1}^n \exp\del{ \ip{\queryemb(x_n), \keyemb_i(x_i)} }
    }
    ,
  \end{equation*}
  where $\queryemb \colon \set{0,1} \to \ips$, $\keyemb_i \colon \set{0,1} \to \ips$, and $\valueemb_i \colon \set{0,1} \to \R^d$ are arbitrary (position-specific) embedding functions, $(\ips,\ip{\cdot,\cdot})$ is an inner product space, and $\R^d$ is the $d$-dimensional Euclidean space with coordinate basis vectors $e_1,\dotsc,e_d$.
  (We only consider the output corresponding to $x_n$.)
\end{definition}

\begin{definition} \label{def:ratdeg}
  A \emph{rational function of degree at most $p$} on $\R^d$ is a function $u=P/Q$, where $P,Q \in \R[z_1,\dotsc,z_d]$ have total degree at most $p$ and $Q$ is nonzero on the domain where $u$ is evaluated.
  More generally, for a rational function $R=P/Q$ on $\set{0,1}^n$, with $Q$ nonzero on $\set{0,1}^n$, we write
  \begin{equation*}
    \deg(R) \coloneqq \max\set{\deg(P),\deg(Q)} .
  \end{equation*}
\end{definition}

\begin{definition} \label{def:sal}
  A \emph{self-attention layer with $h$ heads and value-embedding dimension $d$ post-processed by a function $u \colon \R^d \to \R$} is the composition of $u$ with the sum of $h$ self-attention heads $A_1,\dotsc,A_h$ each with value-embedding dimension $d$
  \begin{equation*}
    x \mapsto u\del*{ A_1(x) + \dotsb + A_h(x) } .
  \end{equation*}
  When $u$ is rational, we require its denominator to be nonzero on every vector $A_1(x)+\dotsb+A_h(x)$ with $x \in \set{0,1}^n$.
\end{definition}

\begin{definition} \label{def:sgn}
  A real-valued function $f \colon \set{0,1}^n \to \R$ \emph{sign-represents} a Boolean function $g \colon \set{0,1}^n \to \set{0,1}$ if, for every $x \in \set{0,1}^n$,
  \begin{equation*}
    f(x) > 0 \iff g(x) = 1 .
  \end{equation*}
\end{definition}

\begin{definition} \label{def:margin}
  A real-valued function $f \colon \set{0,1}^n \to \R$ \emph{$(\tau,\gamma)$-represents} a Boolean function $g \colon \set{0,1}^n \to \set{0,1}$ if, for every $x \in \set{0,1}^n$,
  \begin{equation*}
    g(x)=1 \implies f(x) \geq \tau+\gamma,
    \qquad
    g(x)=0 \implies f(x) \leq \tau-\gamma .
  \end{equation*}
  Equivalently, $f-\tau$ sign-represents $g$ with margin $\gamma$.
\end{definition}

\begin{definition} \label{def:relu}
  Let $\relu(t) \coloneqq \max\set{0,t}$.
  A \emph{normalized ReLU network of width $m$ and depth $\ell$} is a scalar-output feed-forward network with at most $m$ ReLU gates in each of at most $\ell$ layers, where each gate computes $z \mapsto \relu(a^\T z+b)$ with $\norm{a}_1+\abs{b} \leq 1$.
\end{definition}

\section{Lower bound}
\label{sec:lb}

\begin{theorem} \label{thm:parity}
  Let $T$ be a self-attention layer with $h$ heads and value-embedding dimension $d$ post-processed by a rational function of degree at most $p$.
  If $T$ sign-represents the parity function on $n$ input bits, then
  \begin{equation*}
    hp \geq \frac{n}{4} .
  \end{equation*}
\end{theorem}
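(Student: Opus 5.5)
The plan is to fix the last bit and show that, on the remaining $n-1$ bits, $T$ is a ratio of two low-degree polynomials. Then I would invoke the classical sign-degree lower bound for parity (Minsky--Papert): any real polynomial on $\set{0,1}^m$ that sign-represents parity (or its negation) has degree at least $m$.

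\emph{Step 1: restrict $x_n$.} Fix $x_n=b\in\set{0,1}$. Then $\queryemb(x_n)$ is a fixed vector $q$. Restricting parity on $n$ bits gives parity, or its negation, on $x_1,\dotsc,x_{n-1}$. For each head $A_j$ and each position $i$, define $w_{j,i}(x_i)=\exp(\ip{q,\keyemb_{j,i}(x_i)})>0$. A function of a single bit is affine in that bit, so $w_{j,i}(x_i)$ is affine and positive in $x_i$. Likewise each coordinate of $w_{j,i}(x_i)\valueemb_{j,i}(x_i)$ is affine in $x_i$. For $i=n$ these quantities are constants. Hence $A_j=N_j/D_j$, where each coordinate of $N_j$ has degree at most $1$ in $x_1,\dotsc,x_{n-1}$, and $D_j$ has degree at most $1$ and is strictly positive on the cube.

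\emph{Step 2: clear denominators.} Put $D=D_1\dotsm D_h$, which has degree at most $h$ and is positive on the cube. Then $S:=A_1+\dotsb+A_h=M/D$, where each coordinate of $M=\sum_j N_j\prod_{k\neq j}D_k$ has degree at most $h$. Write $u=P/Q$ with $\deg P,\deg Q\le p$. I would define
\[
\tilde P:=D^pP(M/D)=\sum_{\abs{\alpha}\le p}c_\alpha M^\alpha D^{p-\abs{\alpha}} ,
\]
and $\tilde Q$ analogously from $Q$. Each is a polynomial of degree at most $hp$, and $T=\tilde P/\tilde Q$ on the cube. Here $\tilde Q=D^pQ(S)\ne0$, by the requirement in \Cref{def:sal} that the denominator of $u$ not vanish. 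Multilinearizing via $x_i^2=x_i$ does not increase degree, so I may regard $\tilde P,\tilde Q$ as multilinear.

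\emph{Step 3: sign-degree.} Since $\tilde Q$ is nonzero on the cube, $\sign T=\sign(\tilde P\tilde Q)$ pointwise. So $\tilde P\tilde Q$, of degree at most $2hp$, sign-represents parity or its negation on $n-1$ bits. By Minsky--Papert, $2hp\ge n-1$.

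\emph{Step 4: conclude.} For $n\ge2$ this gives $hp\ge(n-1)/2\ge n/4$. For $n=1$ we need $hp\geq 1/4$. I would note that $p=0$ makes $u$, and hence $T$, constant, and a constant cannot represent $x_1$; with $h\ge1$ this handles $n=1$.

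The main obstacle is Step 2: the bookkeeping showing that composing a degree-$p$ rational function with a sum of $h$ softmax ratios yields numerator and denominator of degree at most $hp$. The key points are the homogenization by $D^p$, which keeps the sign because $D>0$, and the observation that position-specific embeddings of single bits are affine. The rest is the cited Minsky--Papert bound.
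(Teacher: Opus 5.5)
Your proof is correct, and it differs from the paper's at the degree-counting step. The paper does not restrict any variable. In \Cref{lem:coord} it writes each head coordinate on all of $\set{0,1}^n$ as a ratio of polynomials of degree at most $2$. The extra degree comes from the query--key interaction: the indicator of $(x_n,x_i)=(a,b)$ is a product of an affine function of $x_n$ and an affine function of $x_i$. \Cref{lem:layer-degree} then bounds numerator and denominator by $2hp$. The paper cites Corollary~1 of Paturi and Saks, that $\deg P+\deg Q\ge n$ for any rational sign-representation of parity; this follows from Minsky--Papert via your observation that $\sign(P/Q)=\sign(PQ)$. That gives $4hp\ge n$. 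By fixing $x_n$ first you freeze the query, so each head becomes a ratio of affine functions. This halves the degree to $hp$ at the cost of one variable, giving $2hp\ge n-1$, i.e.\ $hp\ge (n-1)/2$. That is asymptotically twice the paper's bound, with only the trivial case $n=1$ handled separately (there $hp=0$ would force $T$ to be constant). The paper's route buys a single degree bound for $T$ on the whole cube, which the subsequent remark plugs directly into Kane's average-sensitivity bounds; yours buys the better constant. One small point is worth making explicit. The paper's notion of sign-representation allows $f(x)=0$ where parity is $0$, whereas Minsky--Papert is usually stated for strict sign-representation. This is harmless: subtracting a sufficiently small positive constant from $\tilde P\tilde Q$ makes the representation strict without increasing the degree.
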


The proof of \Cref{thm:parity} uses a rational representation of self-attention heads over $\set{0,1}^n$ inspired by the proof of the average sensitivity bound of \citet{kozachinskiy2026parity}.

\begin{lemma} \label{lem:coord}
  Let $A \colon \set{0,1}^n \to \R^d$ be a self-attention head as in \Cref{def:sa}.
  For every coordinate $k \in [d]$, there are polynomials $N_k,D \in \R[x_1,\dotsc,x_n]$ of degree at most $2$ such that $D(x)>0$ and
  \begin{equation*}
    e_k^\T A(x) = \frac{N_k(x)}{D(x)}
  \end{equation*}
  for all $x \in \set{0,1}^n$.
\end{lemma}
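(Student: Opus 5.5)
The idea is to write the attention weights as polynomials that are affine in each $x_i$ separately, with coefficients depending only on $x_n$ (the query bit). Since $x_n \in \set{0,1}$, the query takes just two values, $\queryemb(0)$ and $\queryemb(1)$. For each position $i$ and each bit $b\in\set{0,1}$, define the positive reals
\begin{equation*}
  w_i^{(q)}(b) \coloneqq \exp\del{\ip{\queryemb(q),\keyemb_i(b)}}, \qquad q\in\set{0,1}.
\end{equation*}
On Boolean inputs, any function of a single bit can be written as an affine function of it. So the weight at position $i$ is
\begin{equation*}
  \omega_i(x_i,x_n) = (1-x_n)\bigl[(1-x_i)w_i^{(0)}(0)+x_i w_i^{(0)}(1)\bigr] + x_n\bigl[(1-x_i)w_i^{(1)}(0)+x_i w_i^{(1)}(1)\bigr],
\end{equation*}
which is a polynomial of degree at most $2$ in $(x_i,x_n)$. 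For $i=n$ the expression only involves $x_n$ and has degree at most $2$ as well; one can also reduce it to degree $1$ using $x_n^2=x_n$.

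The steps are as follows.

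\textbf{Denominator.} Set $D(x) \coloneqq \sum_{i=1}^n \omega_i(x_i,x_n)$. This is a sum of degree-$\le 2$ polynomials, so $\deg D\le 2$. On $\set{0,1}^n$ each $\omega_i$ equals some $\exp(\cdot)>0$, so $D(x)>0$.

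\textbf{Numerator.} For coordinate $k$, write $v_{i,k}(b)\coloneqq e_k^\T\valueemb_i(b)$ and linearize it as $v_{i,k}(x_i)=(1-x_i)v_{i,k}(0)+x_i v_{i,k}(1)$. The product $\omega_i v_{i,k}$ then has degree up to $3$, but the term $x_i^2$ can be replaced by $x_i$ on Boolean inputs. A cleaner route is to linearize the product directly: define
\begin{equation*}
  N_k(x) \coloneqq \sum_{i=1}^n \sum_{q,b\in\set{0,1}} \ell_q(x_n)\,\ell_b(x_i)\, w_i^{(q)}(b)\, v_{i,k}(b),
\end{equation*}
where $\ell_1(t)=t$ and $\ell_0(t)=1-t$. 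Each summand is a product of two degree-$1$ factors, so $\deg N_k\le 2$. For $i=n$ the pair $(q,b)$ is forced to be equal, and the term reduces to degree $\le 2$ (in fact degree $1$ after multilinearization). On $\set{0,1}^n$, the expression $\ell_q(x_n)\ell_b(x_i)$ is the indicator of the event $x_n=q,\ x_i=b$. Hence $N_k(x)$ equals the numerator of $e_k^\T A(x)$, and $D(x)$ is the common denominator. This gives $e_k^\T A(x)=N_k(x)/D(x)$ with the same $D$ for every $k$.

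The only point requiring care is keeping the degree at most $2$. This works because every term depends on at most the two variables $x_i$ and $x_n$, and it is multilinear in them. The $i=n$ term needs separate handling, and that is the main thing to check: there, $\ell_q(x_n)\ell_b(x_n)$ equals $\ell_b(x_n)$ when $q=b$ and equals $0$ when $q\neq b$ on Boolean inputs. It is a degree-$2$ polynomial in any case, so the bound holds without even multilinearizing.
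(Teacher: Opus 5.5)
Your proof is correct and follows the paper's argument. The paper also expands each summand as $\sum_{(a,b)\in\set{0,1}^2}\exp(\ip{\queryemb(a),\keyemb_i(b)})\,e_k^\T\valueemb_i(b)\,\ind{(x_n,x_i)=(a,b)}$, writes the indicator as a product of two affine factors (your $\ell_a(x_n)\ell_b(x_i)$), and gets $D>0$ because on $\set{0,1}^n$ it is a sum of exponentials; your separate treatment of $i=n$ is harmless but unnecessary, since a product of two affine factors has degree at most $2$ whether or not $i=n$.
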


\begin{proof}
  For any $k \in [d]$ and $i \in [n]$,
  \begin{equation*}
    \exp\del{ \ip{\queryemb(x_n), \keyemb_i(x_i)} } \, e_k^\T \valueemb_i(x_i)
    =
    \sum_{(a,b) \in \set{0,1}^2}
    \exp\del{ \ip{\queryemb(a), \keyemb_i(b)} } \, e_k^\T \valueemb_i(b) \, \ind{(x_n,x_i) = (a,b)} .
  \end{equation*}
  On $\set{0,1}^2$,
  \begin{equation*}
    \ind{(x_n,x_i) = (a,b)} = \del*{ 1-a + (2a-1)x_n } \del*{ 1-b + (2b-1)x_i } ,
  \end{equation*}
  which is a polynomial in $x_n$ and $x_i$ of degree at most $2$.
  Hence each summand in the numerator of $e_k^\T A$ has degree at most $2$, and so the numerator
  \begin{equation*}
    N_k(x) \coloneqq \sum_{i=1}^n
    \exp\del{ \ip{\queryemb(x_n), \keyemb_i(x_i)} } \, e_k^\T \valueemb_i(x_i)
  \end{equation*}
  has a degree-at-most-$2$ polynomial representation on $\set{0,1}^n$.
  Similarly,
  \begin{equation*}
    D(x) \coloneqq \sum_{i=1}^n \exp\del{ \ip{\queryemb(x_n), \keyemb_i(x_i)} }
  \end{equation*}
  has a degree-at-most-$2$ polynomial representation on $\set{0,1}^n$.
  Since $D$ is a sum of positive exponentials, $D(x)>0$ for every $x \in \set{0,1}^n$.
\end{proof}

\begin{lemma} \label{lem:layer-degree}
  Let $T$ be a self-attention layer with $h$ heads and value-embedding dimension $d$ post-processed by a rational function of degree at most $p$.
  Then $T$, restricted to $\set{0,1}^n$, is a rational function of degree at most $2hp$.
\end{lemma}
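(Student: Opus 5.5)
Let $S(x) \coloneqq A_1(x)+\dotsb+A_h(x)$ and let $u=P/Q$ with $\deg P,\deg Q \le p$. The plan is to write every coordinate of $S$ as a single fraction with a common denominator, and then substitute these fractions into $P$ and $Q$ and clear denominators homogeneously.

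\emph{Step 1: a common denominator for $S$.} Apply \Cref{lem:coord} to each head $A_j$. This gives degree-$\le 2$ polynomials $N_{j,k}$ and $D_j$ with $D_j>0$ on $\set{0,1}^n$ and $e_k^\T A_j = N_{j,k}/D_j$. Note that the denominator $D_j$ is the same for all coordinates $k$ of a given head. Set
\begin{equation*}
  D \coloneqq \prod_{j=1}^h D_j , \qquad M_k \coloneqq \sum_{j=1}^h N_{j,k} \prod_{j' \neq j} D_{j'} .
\end{equation*}
Then $e_k^\T S(x) = M_k(x)/D(x)$ on $\set{0,1}^n$. Here $D>0$, $\deg D \le 2h$, and $\deg M_k \le 2h$.

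\emph{Step 2: clearing denominators with homogenization.} Write $P = \sum_{r=0}^{p} P_r$, where $P_r$ is the homogeneous component of degree $r$. Then
\begin{equation*}
  P(S(x)) = \sum_{r=0}^p D(x)^{-r} P_r(M_1(x),\dotsc,M_d(x)) = \frac{\tilde P(x)}{D(x)^p},
  \qquad \tilde P \coloneqq \sum_{r=0}^p D^{p-r}\, P_r(M_1,\dotsc,M_d).
\end{equation*}
Each term of $\tilde P$ has degree at most $2h(p-r)+2hr = 2hp$. Define $\tilde Q$ from $Q$ in the same way, so that $Q(S(x)) = \tilde Q(x)/D(x)^p$ with $\deg \tilde Q \le 2hp$.

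\emph{Step 3: conclude.} The factors $D^p$ cancel, so $T(x) = \tilde P(x)/\tilde Q(x)$ on $\set{0,1}^n$. The denominator does not vanish there: $\tilde Q(x) = D(x)^p\, Q(S(x))$, where $D(x)>0$ and $Q(S(x)) \neq 0$ by the nonvanishing requirement in \Cref{def:sal}. Hence $\deg(T) \le 2hp$ in the sense of \Cref{def:ratdeg}.

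The main point to get right is Step 2. Naively clearing each monomial's denominator separately, or using a separate denominator for each coordinate or head, would inflate the degree beyond $2hp$. Using one shared denominator $D$ and multiplying through by the fixed power $D^p$ keeps the degree at $2hp$. Once that is done, positivity of $D$ is what guarantees that $\tilde Q$ does not vanish on $\set{0,1}^n$.
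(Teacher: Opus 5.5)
Your proof is correct and matches the paper's: you form a single common denominator $\prod_j D_j$ of degree at most $2h$ for the head sum, homogenize $P$ and $Q$ so that one factor of that denominator raised to the power $p$ clears everything, and get nonvanishing of $\tilde Q$ from positivity of the denominator together with the assumption on $Q$. Grouping $P$ by homogeneous components $P_r$, rather than by monomials $c_\alpha z^\alpha$ as the paper does, is only a cosmetic difference.
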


\begin{proof}
  Write the $j$th head as $A_j=(N_{j,1}/D_j,\dotsc,N_{j,d}/D_j)$ using \Cref{lem:coord}, where each $N_{j,k}$ and $D_j$ has degree at most $2$ and $D_j>0$ on $\set{0,1}^n$.
  Let
  \begin{equation*}
    S \coloneqq \prod_{j=1}^h D_j .
  \end{equation*}
  Then $S$ has degree at most $2h$ and is positive on $\set{0,1}^n$.
  For every coordinate $k$, the $k$th coordinate of the sum of the heads is
  \begin{equation*}
    \sum_{j=1}^h \frac{N_{j,k}}{D_j}
    =
    \frac{M_k}{S},
    \qquad
    M_k \coloneqq \sum_{j=1}^h N_{j,k} \prod_{\ell \neq j} D_\ell .
  \end{equation*}
  Each $M_k$ has degree at most $2h$.

  Let the post-processing function be $u=P/Q$, where $P,Q \in \R[z_1,\dotsc,z_d]$ have total degree at most $p$, and $Q$ is nonzero on all attained head sums.
  For a multi-index $\alpha \in \N^d$, write $z^\alpha=z_1^{\alpha_1}\dotsm z_d^{\alpha_d}$ and $|\alpha|=\alpha_1+\dotsb+\alpha_d$.
  If $P(z)=\sum_{|\alpha|\leq p} c_\alpha z^\alpha$, define
  \begin{equation*}
    \widetilde P(x)
    \coloneqq
    \sum_{|\alpha|\leq p} c_\alpha
    \del*{\prod_{k=1}^d M_k(x)^{\alpha_k}} S(x)^{p-|\alpha|} .
  \end{equation*}
  Since $\deg(M_k)\leq 2h$ and $\deg(S)\leq 2h$, every term in $\widetilde P$ has degree at most $2hp$.
  Moreover,
  \begin{equation*}
    P\del*{\frac{M_1}{S},\dotsc,\frac{M_d}{S}} = \frac{\widetilde P}{S^p}
  \end{equation*}
  on $\set{0,1}^n$.
  Defining $\widetilde Q$ analogously gives $\deg(\widetilde Q)\leq 2hp$ and
  \begin{equation*}
    T(x)=u\del*{A_1(x)+\dotsb+A_h(x)}=\frac{\widetilde P(x)}{\widetilde Q(x)}
  \end{equation*}
  on $\set{0,1}^n$.
  The denominator $\widetilde Q$ is nonzero on $\set{0,1}^n$ because $S>0$ there and $Q$ is nonzero on every attained head sum.
\end{proof}

\begin{proof}[Proof of \Cref{thm:parity}]
  By \Cref{lem:layer-degree}, $T$ is a rational function of degree at most $2hp$ on $\set{0,1}^n$.
  By Corollary~1 of \citet{paturi1994approximating}, if a rational function $P(x)/Q(x)$ sign-represents the parity function on $n$ input bits, then $\deg(P) + \deg(Q) \geq n$.
  Therefore any rational function sign-representing parity has degree at least $n/2$.
  Hence, if $T$ sign-represents parity, then
  \begin{equation*}
    2hp \geq \deg(T) \geq \frac{n}{2} .
    \qedhere
  \end{equation*}
\end{proof}

\begin{remark}
  Since the sign of a rational function $P(x)/Q(x)$ is equivalent to the polynomial threshold function $\sign(P(x)Q(x))$, combining \Cref{lem:layer-degree} with the average sensitivity bounds of \citet{kane2013correct} implies an upper bound on the average sensitivity of $\sign(T)$ (for $T$ as in \Cref{lem:layer-degree}) of
  \begin{equation*}
    \sqrt{n} (\log n)^{O(hp \log (hp))} 2^{O((hp)^2 \log(hp))} .
  \end{equation*}
  This recovers an analogue of the bound of \citet{kozachinskiy2026parity} for rational function post-processing showing explicit dependence on $h$ and $p$.
  By standard techniques in Boolean function analysis, we obtain an upper bound on the correlation of $\sign(T)$ with the parity function of
  \begin{equation*}
    n^{-1/4} (\log n)^{O(hp \log (hp))} 2^{O((hp)^2 \log(hp))} .
  \end{equation*}
  If the eponymous (asymptotic) conjecture of~\citet{gotsman1994spectral} holds, then the correlation bound improves to $O(n^{-1/4} \sqrt{hp})$.
\end{remark}

\section{ReLU network post-processing}
\label{sec:relu}

\Cref{sec:lb} gives a lower bound against self-attention layers post-processed by rational functions.
To handle post-processing by other types of functions, we can use rational function approximation.
For instance, a theorem of \citet{newman1964rational} can be used to obtain bounds on the degrees of the best rational function approximations to threshold circuits~\citep{paturi1994approximating} and ReLU networks~\citep{telgarsky2017neural}.
Then \Cref{thm:parity} may give lower bounds on the number of self-attention heads and relevant properties of the post-processing function.
In this \namecref{sec:relu}, we carry out this approach for ReLU network post-processing.

\begin{theorem} \label{thm:relu}
  There is a universal constant $C>0$ such that the following holds.
  Let
  \begin{equation*}
    T(x) = u\del*{A_1(x)+\dotsb+A_h(x)}
  \end{equation*}
  be a self-attention layer with $h$ heads and value-embedding dimension $d$, and assume
  \begin{equation*}
    A_1(x)+\dotsb+A_h(x) \in \intcc{-1,1}^d
    \qquad
    \text{for every } x \in \set{0,1}^n .
  \end{equation*}
  Suppose $u\colon \intcc{-1,1}^d \to \R$ is computed by a normalized ReLU network of width $m$ and depth $\ell$.
  If $T$ $(\tau,\gamma)$-represents the parity function on $n$ input bits for some threshold $\tau \in \R$ and margin $\gamma \in \intoc{0,1}$, then
  \begin{equation*}
    h m^\ell \ln\del*{ \frac{2\ell}{\gamma} }^{2\ell}
    \geq
    \frac{n}{C} .
  \end{equation*}
\end{theorem}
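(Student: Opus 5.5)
The plan is to approximate the ReLU network $u$ uniformly on $\intcc{-1,1}^d$ by a rational function $R$ of controlled degree $p$, with error strictly less than $\gamma$. Then $R(A_1(x)+\dotsb+A_h(x))-\tau$ sign-represents parity. Applying \Cref{thm:parity} to the layer post-processed by $R-\tau$, which is rational of the same degree, gives $hp \ge n/4$. It therefore suffices to show that $p \le C' m^\ell \ln(2\ell/\gamma)^{2\ell}$ for a universal constant $C'$.

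The approximation is built gate by gate, following \citet{telgarsky2017neural}. By \citet{newman1964rational}, for every $\epsilon\in(0,1)$ there is a univariate rational function $r_\epsilon$ of degree $O(\ln(1/\epsilon)^2)$ with $\abs{r_\epsilon(t)-\relu(t)}\le\epsilon$ on $\intcc{-1,1}$. We get it from Newman's approximation of $\abs{t}$ via $\relu(t)=(t+\abs{t})/2$.

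\begin{itemize}
\item \textbf{Range control.} Normalization gives $\norm{a}_1+\abs{b}\le 1$, so if every input to a gate lies in $\intcc{-1,1}$, then every preactivation $a^\T z+b$ lies in $\intcc{-1,1}$ and every exact ReLU output lies in $\intcc{0,1}$. For the approximate network, whose values may leave $\intcc{-1,1}$ by the accumulated error, I plan to approximate on a slightly enlarged interval such as $\intcc{-2,2}$, by rescaling Newman's construction. Alternatively, the error can be kept below $1/2$ so that all arguments stay in $\intcc{-2,2}$.
\item \textbf{Error propagation.} Replace every gate by $r_\epsilon$. Since each gate map is $1$-Lipschitz in $\norm{\cdot}_\infty$ (because $\norm{a}_1\le1$), errors add per layer: the error after layer $j$ is at most $j\epsilon$. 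Choosing $\epsilon=\gamma/(2\ell)$ gives total error $\le\gamma/2<\gamma$, which explains the $\ln(2\ell/\gamma)$ in the statement.
\item \textbf{Degree bookkeeping.} This is the main obstacle. A layer-$j$ gate is a degree-$q=O(\ln(2\ell/\gamma)^2)$ rational function of a sum of up to $m$ rational functions from the previous layer. Summing $m$ rational functions of degree $D$ with different denominators gives degree up to $mD$, and composing with $r_\epsilon$ multiplies the degree by $q$. So $D_j\le q\,m\,D_{j-1}$ and $D_\ell\le (qm)^\ell$, up to a final linear output layer that costs one more factor of $m$. This yields $p\le m^{\ell}\cdot O(\ln(2\ell/\gamma)^2)^{\ell}$, matching the form $m^\ell\ln(2\ell/\gamma)^{2\ell}$.

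The constant is the delicate part. The bound $O(\ln(1/\epsilon)^2)^\ell$ hides a factor $c^\ell$, which is not absorbed by a universal $C$. I would handle it by using Newman's explicit bound, degree $\le c\ln(1/\epsilon)^2$. Since $\gamma\le1$ gives $\ln(2\ell/\gamma)\ge\ln 2$, I would absorb $c$ by enlarging $\epsilon$'s logarithm, or by noting that the statement's form tolerates a $c^\ell$ factor only if $\ln(2\ell/\gamma)^{2}$ dominates $c$. If needed, I would instead take $\epsilon$ slightly smaller, so that the degree is at most $\ln(2\ell/\gamma)^2$ times a factor folded into $m$, using $m\ge1$ and crude bounds.
\end{itemize}

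Finally, the denominators must be nonvanishing. Newman's rational approximant of $\abs{t}$ has a denominator positive on the whole interval, and sums and compositions keep denominators nonzero on attained values. So the composed $R$ satisfies the requirements of \Cref{def:sal}, and \Cref{thm:parity} applies. Combining $hp\ge n/4$ with the degree bound gives $h m^\ell\ln(2\ell/\gamma)^{2\ell}\ge n/C$.
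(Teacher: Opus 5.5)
Your reduction is the same as the paper's. You approximate $u$ on $\intcc{-1,1}^d$ within $\gamma/2$ by a rational $v$, note that $x\mapsto v(Y(x))-\tau$ sign-represents parity and is rational of the same degree, and apply \Cref{thm:parity}. The difference is that the paper does not rebuild the approximation. It quotes \citet[Theorem~1.1(2)]{telgarsky2017neural} once, with $\varepsilon=\gamma/2$, in the form $p\le\frac14 Cm^\ell\ln(\ell/\varepsilon)^{2\ell}$ with $C$ independent of $\ell$. Both the $\ell$ inside $\ln(2\ell/\gamma)$ and the universality of $C$ come from that citation. Your gate-by-gate reconstruction is fine on range control, error accumulation and nonvanishing denominators. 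One small fix in the degree count: first-layer preactivations are affine in $y$, so $D_1\le q$. That is why an output combination brings you to $m^\ell$ and not $m^{\ell+1}$.

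The gap is the one you flagged, and none of your proposed repairs closes it. With Newman's bound $3e^{-\sqrt q}$ and per-gate error $\gamma/(2\ell)$, you need $q\ge\ln(6\ell/\gamma)^2$. So you get $p\approx m^\ell\bigl(\ln(2\ell/\gamma)+\ln 3\bigr)^{2\ell}$, which exceeds $m^\ell\ln(2\ell/\gamma)^{2\ell}$ by the factor $\bigl(1+\ln 3/\ln(2\ell/\gamma)\bigr)^{2\ell}$. For $\gamma=1$ this factor is $\exp(\Theta(\ell/\log\ell))$, so no universal $C$ absorbs it. Your three fixes each fail:
\begin{itemize}
\item Making $\epsilon$ smaller only increases $q$.
\item Comparing $\ln(2\ell/\gamma)^2$ with $c$ does not help. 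That logarithm can be as small as $\ln 2$, and even when it is large you only trade $c^\ell$ for a higher power of the logarithm.
\item Folding $c^\ell$ into $m^\ell$ proves the inequality with $cm$ in place of $m$, which is a weaker statement.
\end{itemize}
So as written you prove $h(cm)^\ell\ln(2\ell/\gamma)^{2\ell}\ge n/C$. To get the stated form, you have two options. You can invoke Telgarsky's bound directly, as the paper does. Or you can keep your construction but replace Newman by a rational approximation of $\abs{t}$ with error $Ke^{-\kappa\sqrt q}$ for some $\kappa>1$; Stahl's theorem supplies this, with error $\approx 8e^{-\pi\sqrt q}$. Then the per-gate degree is at most $\ln(2\ell/\gamma)^2$ once $\ln(2\ell/\gamma)$ exceeds an absolute constant $L_0$. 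When $\ln(2\ell/\gamma)<L_0$, you have $\ell<e^{L_0}/2$, so the leftover factor is an absolute constant.
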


\begin{proof}
  By the rational approximation theorem for ReLU networks of \citet[Theorem~1.1(2)]{telgarsky2017neural}, for every $\varepsilon \in \intoc{0,1}$ there is a rational function $v \colon \intcc{-1,1}^d \to \R$ of degree at most
  \begin{equation*}
    p \leq \frac14 C m^\ell \ln\del*{ \frac{\ell}{\varepsilon} }^{2\ell}
  \end{equation*}
  such that
  \begin{equation*}
    \sup_{y \in \intcc{-1,1}^d} \abs{u(y)-v(y)} \leq \varepsilon .
  \end{equation*}
  Take $\varepsilon=\gamma/2$.
  For each $x \in \set{0,1}^n$, write $Y(x) \coloneqq A_1(x)+\dotsb+A_h(x)$.
  The range assumption gives $Y(x) \in \intcc{-1,1}^d$, so
  \begin{equation*}
    \abs{u(Y(x))-v(Y(x))} \leq \frac{\gamma}{2} .
  \end{equation*}
  If the parity function on $x$ evaluates to $1$, then
  \begin{equation*}
    v(Y(x))-\tau
    \geq u(Y(x))-\tau-\frac{\gamma}{2}
    \geq \frac{\gamma}{2} >0 .
  \end{equation*}
  If the parity function on $x$ evaluates to $0$, then
  \begin{equation*}
    v(Y(x))-\tau
    \leq u(Y(x))-\tau+\frac{\gamma}{2}
    \leq -\frac{\gamma}{2}<0 .
  \end{equation*}
  Therefore $x \mapsto v(Y(x))-\tau$ sign-represents parity.
  Since $v-\tau$ is still a rational function of degree at most $p$, \Cref{thm:parity} implies $hp \geq n/4$.
  Substituting the bound on $p$ with $\varepsilon=\gamma/2$ proves the result.
\end{proof}

\begin{corollary} \label{cor:relu-poly-margin}
  If, in \Cref{thm:relu}, the depth $\ell$ is constant and the margin satisfies $\gamma \geq n^{-O(1)}$, then
  \begin{equation*}
    h m^\ell
    \geq
    \frac{n}{(\log n)^{O(\ell)}} .
  \end{equation*}
  In particular, for constant depth and polynomial margin, a self-attention layer post-processed by a ReLU network needs the product $hm^\ell$ to be nearly linear in $n$.
\end{corollary}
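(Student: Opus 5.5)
The plan is to read \Cref{cor:relu-poly-margin} directly off \Cref{thm:relu}. First I rearrange the conclusion of \Cref{thm:relu} so that the logarithmic factor is isolated:
\begin{equation*}
  h m^\ell \geq \frac{n}{C \ln\del*{ 2\ell/\gamma }^{2\ell}} .
\end{equation*}
All that remains is to bound $\ln(2\ell/\gamma)^{2\ell}$ by $(\log n)^{O(\ell)}$ under the stated hypotheses.

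By assumption, $\gamma \geq n^{-c}$ for some constant $c>0$, and $\ell$ is a constant. Since $\gamma \leq 1$, we have $2\ell/\gamma \geq 2\ell \geq 2$, so the logarithm is positive. It satisfies
\begin{equation*}
  \ln\del*{2\ell/\gamma} \leq \ln(2\ell) + c\ln n \leq c' \ln n
\end{equation*}
for all $n \geq 2$, where $c'$ depends only on $c$ and $\ell$. Raising both sides to the power $2\ell$ gives $\ln(2\ell/\gamma)^{2\ell} \leq (c')^{2\ell}(\ln n)^{2\ell}$. Because $\ell$ is constant, $(c')^{2\ell}$ and the constant $C$ are both absorbed into the exponent: $C(c')^{2\ell}(\ln n)^{2\ell} \leq (\log n)^{O(\ell)}$ for $n$ at least a constant. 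Small $n$ are handled by adjusting the implied constant, since the bound is trivial when $n$ is bounded.

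The main point of care is bookkeeping. The $O(\ell)$ in the exponent must be allowed to hide the constant $c$ from the margin assumption as well as $C$. This is legitimate because $\ell$ is constant, so any constant factor can be written as $(\log n)^{O(1)}$ once $n$ is large enough. With that, the second sentence of the corollary is just a restatement: constant depth and polynomial margin force $hm^\ell \geq n^{1-o(1)}$.
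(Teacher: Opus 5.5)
Your proposal is correct and matches the paper, which gives no separate proof and treats the corollary as immediate from \Cref{thm:relu}: bound $\ln(2\ell/\gamma) \leq \ln(2\ell) + c\ln n = O(\ln n)$ and absorb $C$ and the remaining constants into the exponent, using that $\ell$ is constant. One small quibble: enlarging the implied constant does not handle small $n$ when $\log n \leq 1$, because there a larger exponent does not shrink $n/(\log n)^{K}$, and $n=1$ is degenerate; so the statement should simply be read, as usual for asymptotic bounds, for all sufficiently large $n$.
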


\section*{AI usage}

GPT 5.5 was prompted to derive and write-up the result in \Cref{sec:relu} based on a version of this note without that \namecref{sec:relu} (and associated definitions/discussion).

\bibliography{bib}
\bibliographystyle{plainnat}

\end{document}